\documentclass[letterpaper]{article}    %
\usepackage{aaai23}                     %
\usepackage[utf8]{inputenc}
\usepackage{times}                      %
\usepackage{helvet}                     %
\usepackage{courier}                    %
\usepackage[hyphens]{url}               %
\usepackage{graphicx}                   %
\urlstyle{rm}                           %
\usepackage[]{biblatex}
\addbibresource{references.bib}
\usepackage{caption}                    %
\frenchspacing                          %
\setlength{\pdfpagewidth}{8.5in}        %
\setlength{\pdfpageheight}{11in}        %

\pdfinfo{
/TemplateVersion (2023.1)               %
}

\setcounter{secnumdepth}{2}             %

\usepackage[colorlinks=true,allcolors=blue,pdfborder={0 0 0}]{hyperref}

\graphicspath{ {./FIGURES/} }

\usepackage{amsfonts}
\usepackage{amsmath}
\usepackage{amsthm}
    \theoremstyle{plain}
    \newtheorem{theorem}{Theorem}
    \newtheorem{lemma}{Lemma}
    \newtheorem{definition}{Definition}
\usepackage{enumitem}

\newcommand*\widebar[1]{%
  \hbox{%
    \vbox{%
      \hrule height 0.1pt %
      \kern0.4ex%
      \hbox{%
        \kern-0.2em%
        \ensuremath{#1}%
        \kern-0.2em%
      }%
    }%
  }%
}

\renewcommand\P{\mathbf{P}}
\newcommand\E{\mathbf{E}}
\newcommand\comp[1]{\widebar{#1}}
\newcommand\CircPref{E}
\newcommand\Index[1]{D_{#1}}
\newcommand\UniquePref{I^*}
\newcommand\MaxPref{r}

\usepackage{todonotes}
\usepackage{color}

\title{Error in the Euclidean Preference Model}

\author {
    Luke Thorburn,
    Maria Polukarov,
    Carmine Ventre
}

\affiliations {
    King's College London\\
    \{luke.thorburn, maria.polukarov, carmine.ventre\}@kcl.ac.uk
}

\begin{document}

\maketitle

\begin{abstract}
    Spatial models of preference, in the form of vector embeddings, are learned by many deep learning and multiagent systems, including recommender systems. Often these models are assumed to approximate a Euclidean structure, where an individual prefers alternatives positioned closer to their ``ideal point'', as measured by the Euclidean metric. However, \textcite{bogomolnaia2007} showed that there exist ordinal preference profiles that cannot be represented with this structure if the Euclidean space has two fewer dimensions than there are individuals or alternatives. We extend this result, showing that there are situations in which almost all preference profiles cannot be represented with the Euclidean model, and derive a theoretical lower bound on the expected error when using the Euclidean model to approximate non-Euclidean preference profiles. Our results have implications for the interpretation and use of vector embeddings, because in some cases close approximation of arbitrary, true ordinal relationships can be expected only if the dimensionality of the embeddings is a substantial fraction of the number of entities represented.
\end{abstract}

\section{Introduction}
\label{sec:introduction}

Accurate modelling of preferences is critical to the safe and efficient deployment of AI in multiagent systems \cite{christian2020}. Indeed, if the preference models used by AI agents are not sufficiently expressive to be capable of representing human preferences, they will be at least partially ``mistaken'' about what humans want and, consequently, may take actions that cause harm. For example, a recommender system built on inaccurate preference models may make recommendations that bias perceptions of politics \cite{huszar2021}, contribute to low self esteem \cite{faelens2021}, or encourage unsafe medical interventions \cite{johnson2021}.

Spatial models of preference, in the form of vector embeddings, are widely used in deep learning systems. In user-facing contexts such as recommender systems, user embeddings contain information that might be described literally as preferences \cite{pan2019}. More generally, embeddings that are intended to capture degrees of similarity between elements of a set can often be viewed as a spatial model of preference. For example, each word in a language could be considered to have a ``preference'' over all other words, ``preferring'' those with similar meanings. Spaces of word embeddings \cite{almeida2019} can thus be viewed as models of preference. Spatial models of (literal) preference are also used in the fields of political science \cite{hinich2008}, social choice theory \cite{miller2015} and opinion dynamics \cite{aydogdu2017}, as well as in preference aggregation software such as \emph{Polis} \cite{small2021} and Twitter's \emph{Community Notes} feature~\cite{twitter2022}.

A canonical spatial model is the Euclidean model, where both individuals and alternatives are represented as points in Euclidean space, and each individual prefers alternatives positioned nearer to them, as measured by the standard Euclidean metric \cite{bogomolnaia2007}. A preference profile of $I$ individuals over $A$ alternatives is said to be $d$-Euclidean if it can be represented with a $d$-dimensional Euclidean model. 

The Euclidean model is often used explicitly, and even spatial preference models that are not strictly Euclidean are often assumed to have an approximately Euclidean structure. For example, embeddings in deep learning systems are usually compared using cosine similarity \cite[Ch. 6.4]{jurafsky2020}, which induces the same ordinal relationships as the Euclidean metric when applied to normalized vectors. Given the importance of accurately representing human preferences and the prevalence of Euclidean preference models, it is important to understand their limitations.

In this paper, we assume a ``ground truth'' preference structure where each individual's preference is a strict order over the available alternatives. Consider the expressiveness of the Euclidean preference model relative to this ordinal model. There are three questions one might ask, of increasing informativeness. For fixed positive integers $I$, $A$ and $d$:
    \begin{enumerate}[topsep=0pt, itemsep=0pt]
        \def\labelenumi{\arabic{enumi}.}
    
        \item Are there any preference profiles of $I$ individuals over $A$ alternatives that are not $d$-Euclidean?
    
        \item What proportion of such profiles are not $d$-Euclidean?
    
        \item How large is the expected error when approximating arbitrary preferences with a $d$-dimensional Euclidean model?

    \end{enumerate}
Question 1 was answered by \textcite{bogomolnaia2007}, who showed that when $d < \min\{I-1, A-1\}$, there exist profiles of $I$ preferences over $A$ alternatives that cannot be represented in a $d$-dimensional Euclidean model. In this work, we address questions 2 and 3.

\paragraph{Contributions.}
We extend the results of \textcite{bogomolnaia2007} by proving a series of theoretical bounds. Intuitively, these bounds indicate that when the dimensionality of the Euclidean model is small relative to the number of individuals and the number of alternatives, it is possible that almost all preference profiles cannot be represented. Further, we describe conditions under which only a minute proportion of all possible preferences can be simultaneously represented, unless the dimensionality of the Euclidean model is almost as large as the number of individuals or the number of alternatives. Our final bound is on the expected error when approximating a randomly chosen preference in a Euclidean model of given dimensionality, where we formalize error as the number of adjacent swaps required to transform the nearest representable preference into the true preference that is being approximated.

Our results have implications for the interpretation and use of vector embeddings, because there are situations in which close approximation of arbitrary, true preferences (or preference-like data) is possible only if the dimensionality of the embeddings is a substantial fraction of the number of individuals or alternatives. In these situations, our theoretical bounds can inform the choice of the dimensionality of vector embeddings by quantifying the expected error when the dimensionality is too small. 

\paragraph{Related Work.}
This paper builds on the work of \textcite{bogomolnaia2007}; we state the relevant results in Section \ref{sec:three-questions}. Important context is also provided by \textcite{peters2016} who showed that the problem of determining whether a preference profile is $d$-Euclidean is, in general, NP-hard, and that some ordinal preference profiles require exponentially many bits to be represented in the Euclidean model. These hardness results rule out some of the most obvious approaches to evaluating the expressiveness of the Euclidean model, because for a given ordinal profile it is usually not feasible to check whether it is $d$-Euclidean, or to compute its best approximation with a $d$-dimensional Euclidean model \cite{tydrichova2023}. That said, the computational task of approximating an ordinal preference profile with a $d$-dimensional Euclidean model is known as \textit{multidimensional unfolding}, and has a number of proposed algorithms \cite{bennett1960,elkind2014,luaces2015}.

A related line of work discusses the limitations of the Euclidean preference model from the perspective of measurement theory and psychometric validity. For example, \textcite{eguia2013} questions the validity of the utility functions, indifference curves and separability of preferences that are implied by the Euclidean model. \textcite{henry2013} present analysis suggesting that the Euclidean model is not consistent with real world voting data. A number of works suggest the $L^1$ metric may be more appropriate than the $L^2$ metric in spatial preference models \cite{humphreys2010,rivero2011,eguia2011,eguia2013}.

A general review of structured preference models is given by \textcite{elkind2017}.

\paragraph{Outline.}

Section \ref{sec:preliminaries} introduces our notation and definitions. Section \ref{sec:three-questions} presents answers to the three questions listed above, including our theoretical bounds. Implications are discussed in Section~\ref{sec:conclusions}.

\section{Preliminaries}
\label{sec:preliminaries}

Let $A\in\mathbb{N}$ be the number of alternatives and $I\in \mathbb{N}$ be the number of individuals. Each individual is assumed to have a \emph{preference}: a strict order (ranking without ties) over all $A$ alternatives. The preference of individual $i$ is denoted $\pi_i$ (a ranked list) or $>_i$ (the corresponding order relation), so if $a$ and $b$ are alternatives, $a >_i b$ means individual $i$ prefers $a$ to $b$. The list of all preferences represented in the population of $I$ individuals is called a \emph{profile}. For given values of $A$ and $I$, the set of all possible profiles is denoted $\mathcal{P}_{A,I}$. The number of \emph{unique} preferences in a profile is denoted $\UniquePref$.

In the $d$-dimensional Euclidean preference model, both alternatives and individuals are represented as points in $d$-dimensional Euclidean space, denoted $\mathbb{R}^d$. We refer to these points as the \emph{location} of each alternative and the \emph{ideal point} for each individual. Each individual's preference is determined by the distance between their ideal point and the location of each alternative. Nearer alternatives are preferred, as measured by the standard Euclidean metric.

    \begin{definition}[Euclidean preferences]
        A profile $\Pi\in\mathcal{P}_{A,I}$ is \emph{$d$-Euclidean} if there exist points $x^a\in\mathbb{R}^d$ for all $a\in\{1,\dots,A\}$, and $w^i\in\mathbb{R}^d$ for all $i\in\{1,\dots,I\}$ such that, for all alternatives $a,b$ and individuals $i$, $a>_ib\Leftrightarrow \|x^a-w^i\| < \|x^b-w^i\|$ (using the standard Euclidean norm).
    \end{definition}

Note that while some versions of the Euclidean preference model are used to represent utilities or \emph{cardinal} preferences, in our definition the Euclidean model only describes \emph{ordinal} preferences. We focus on strict orders for simplicity, and also because true indifference in most random $d$-Euclidean preference profiles will occur with probability zero.

    \begin{definition}
        If all profiles $\Pi\in\mathcal{P}_{A,I}$ are Euclidean of dimension $d$, then we say $d$ is \emph{sufficient} for $\mathcal{P}_{A,I}$.
    \end{definition}

For a profile $\Pi$, we use $\text{Euclidean}(\Pi)$ to denote a Euclidean preference model that optimally approximates $\Pi$.

\section{Three Questions}
\label{sec:three-questions}

\subsection{Are all profiles Euclidean?}
\label{sec:q1}

The first question was answered by \textcite{bogomolnaia2007}, who identified the minimum dimensionality required to represent all possible profiles of a given size.

    \begin{theorem}[\citeauthor{bogomolnaia2007} \citeyear{bogomolnaia2007}]
        Dimensionality $d$ is sufficient for $A$, $I$ if and only if $d\geq M$ where either  $M = \min\{I-1,A-1\}$ or $M = \min\{I, A-1\}$, depending on the values of $A$ and $I$.
    \end{theorem}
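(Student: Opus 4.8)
The plan is to prove the two directions separately: sufficiency (if $d \ge M$ then every profile in $\mathcal{P}_{A,I}$ is $d$-Euclidean) by exhibiting explicit embeddings, and necessity (if $d < M$ then some profile fails to be $d$-Euclidean) by combining a hard profile with a geometric obstruction. The $\min$ in the statement suggests first establishing the weaker bound $\min\{I, A-1\}$ from two symmetric constructions, and then separately arguing when one further dimension can be shaved off to reach $\min\{I-1, A-1\}$. For sufficiency I would start from those two constructions. Placing alternative $a$ at the standard basis vector $e_a \in \mathbb{R}^A$ (so all alternatives lie in the affine hyperplane $\sum_j v_j = 1$, a copy of $\mathbb{R}^{A-1}$) gives $\|w - e_a\|^2 = \|w\|^2 - 2 w_a + 1$, so any ideal point $w$ ranks the alternatives in decreasing order of its own coordinates; choosing the coordinates of $w^i$ in the order dictated by $\pi_i$ realizes an arbitrary profile in dimension $A-1$. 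Dually, placing voter $i$ at $R e_i \in \mathbb{R}^I$ and choosing the alternatives so that the $i$-th coordinate ranks them by $\pi_i$, the comparison $\|x^a - R e_i\|^2 - \|x^b - R e_i\|^2$ is dominated by its term linear in $R$ once $R$ is large, recovering $\pi_i$ and realizing any profile in dimension $I$. Together these give that $\min\{I, A-1\}$ is sufficient.

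To push the dual construction down to $I-1$ in the appropriate cases, I would exploit that the ideal points $R e_i$ span only an affine $(I-1)$-flat: decomposing each $x^a$ into its component along $(1,\dots,1)$ and its component orthogonal to it shows that each voter's ranking depends only on the orthogonal components (which live in $\mathbb{R}^{I-1}$) together with a per-alternative scalar offset that is free to choose. This reduces the question to realizability in an ``$(I-1)$-dimensional positions plus free offsets'' model, and I would then determine exactly when that offset freedom compensates for the missing dimension. This step is precisely what separates the $M = \min\{I-1,A-1\}$ regime from the $M = \min\{I,A-1\}$ regime.

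For necessity the engine is an affine-dependency (Radon-type) obstruction. If the $A$ alternative positions are affinely dependent, say $\sum_a \lambda_a x^a = 0$ with $\sum_a \lambda_a = 0$, splitting the indices into $P = \{a : \lambda_a > 0\}$ and $N = \{a : \lambda_a < 0\}$, then averaging the pairwise distance inequalities with weights from this dependency makes every term involving the ideal point cancel, leaving a condition on the norms $\|x^a\|^2$ alone that is independent of the voter. Hence no $d$-Euclidean model can contain both a preference ranking all of $P$ above all of $N$ and one ranking all of $N$ above all of $P$. Since any $A$ points in $\mathbb{R}^{A-2}$ are affinely dependent, a profile realizing, for every bipartition of the alternatives, both separating orders (e.g.\ all $A!$ preferences when $I$ is large) cannot be $(A-2)$-Euclidean, giving $d \ge A-1$. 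A dual argument applies an affine dependency among the ideal points to a single pair $(a,b)$: it forbids the set of voters preferring $a$ to $b$ from coinciding with a Radon partition of the voters, so a profile in which every voter-bipartition appears as some pairwise cut forces the ideal points to be affinely independent, giving $d \ge I-1$.

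The main obstacle, and the reason the theorem has two cases, is closing the gap between the lower bound $d \ge I-1$ that the affine-dependency argument naturally yields and the upper bound $d \le I$ from the dual construction, in the regime $I < A$. Ruling out a realization with the ideal points affinely independent inside $\mathbb{R}^{I-1}$ (to force the full $d \ge I$) cannot be achieved by affine position alone: it requires the finer information carried by the lifted points $(x^a, \|x^a\|^2)$ lying on a paraboloid, i.e.\ a sign analysis of the norm constraint that distinguishes a genuine Euclidean embedding from a merely affine one. I expect this refinement, together with the bookkeeping needed to match constructions and obstructions across all regimes of $A$ and $I$ and the small-value boundary cases, to be the technically demanding part; the two explicit constructions and the Radon obstruction itself are comparatively routine.
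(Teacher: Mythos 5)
This theorem is not proved in the paper at all: it is imported verbatim from \citet{bogomolnaia2007}, so the only fair comparison is with the original source. Measured against that, your sketch correctly reconstructs the two main ingredients. The two sufficiency embeddings (alternatives at the vertices of a simplex in an affine copy of $\mathbb{R}^{A-1}$ so that rankings are read off the coordinates of the ideal point, and the dual large-$R$ construction in $\mathbb{R}^{I}$) are essentially the ones used by Bogomolnaia and Laslier, and your affine-dependency/Radon obstruction --- weighting the squared-distance inequalities by the coefficients of an affine dependency so that the ideal point cancels --- is exactly their key lemma. You are also right that the residual uncertainty between $I-1$ and $I$ is genuine and is precisely why the theorem statement carries the ``depending on the values of $A$ and $I$'' clause.

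The concrete gap is in how you instantiate the necessity direction. Your witness profiles require, for every bipartition of the alternatives (resp.\ of the voters), that both separating orders (resp.\ both sides of every pairwise cut) actually occur in the profile. That forces $I$ to be exponential in $A$ (or $A$ exponential in $I$), so it only proves non-representability deep inside the regime where one parameter dwarfs the other; it cannot establish the stated threshold when $I$ and $A$ are comparable, e.g.\ that dimension $k-2$ already fails for $I = A = k$. The fix --- and what the source actually does, and what the present paper reuses as its ``circulant pathology'' --- is an economical witness: the circulant profile on $k$ voters and $k$ alternatives, whose cyclic structure guarantees that \emph{whichever} affine dependency the $k$ alternative positions in $\mathbb{R}^{k-2}$ happen to satisfy, some pair of the $k$ circulant preferences already violates the weighted-average identity. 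Without such a profile (or an equally small one), your argument proves a strictly weaker statement than the theorem. The remaining hand-waving in the ``shave one dimension off the dual construction'' step and in the paraboloid-lift refinement is acknowledged by you as incomplete, and indeed that is where the case analysis in the exact value of $M$ lives.
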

    
    \begin{proof}
        See \textcite{bogomolnaia2007}, Theorem 8, Proposition 13 and Proposition 15.
    \end{proof}

Thus, the answer to the first question is no, not all profiles are $d$-Euclidean. If $d < \min\{I-1, A-1\}$, then there exists at least one profile $\Pi\in\mathcal{P}_{A,I}$ which cannot be losslessly represented with a $d$-Euclidean model.

Is this really that big a deal? Maybe the profiles that are not $d$-Euclidean are only a small number of pathological edge cases that are unlikely to be encountered in the real world. The next question asks whether this is the case.

\subsection{How common are non-Euclidean profiles?}
\label{sec:q2}

For a given $d<\min\{I-1,A-1\}$, what proportion of preference profiles in $\mathcal{P}_{A,I}$ are not $d$-Euclidean? Given the NP-hardness of recognizing whether a given profile is Euclidean, a precise answer to this question is likely intractable. However, \textcite{bogomolnaia2007} defined three classes of pathological sub-profiles that, if present, cause a profile to not be $d$-Euclidean for some $d$. If we calculate the probability that a pathological sub-profile arises in a profile constructed uniformly at random---that is, in an \textit{impartial culture} \cite{ergecioglu2013}---this probability is precisely the proportion of profiles that exhibit that pathology (and hence are not $d$-Euclidean), which is a lower bound on the proportion of profiles that are not $d$-Euclidean. It is a lower bound because the set of all non $d$-Euclidean profiles is a superset of the set of profiles that contain this pathological sub-profile, and it is \textit{only} a lower bound because we are---to the best of our knowledge---considering only a subset of the pathologies that cause a profile to be non $d$-Euclidean.

It is not known whether the three classes of pathological sub-profiles identified by \textcite{bogomolnaia2007} exhaustively characterize the ways in which a profile can fail to be $d$-Euclidean---there may be other pathologies. %
In this work, we focus on one of these classes (which we call the \textit{circulant pathology}), as of the two other classes they consider, one requires the possibility of ties between alternatives \cite[Ex. 6]{bogomolnaia2007}, (which is a different ``ground truth'' preference model to that which we consider) and the other requires that $A \geq 2^I$ \cite[Ex. 14]{bogomolnaia2007} (which we deemed implausible for the settings we are interested in, such as recommender systems, where $I \gg 100$). Further, while we were able to derive or bound the probability of each class occurring in isolation, calculation of the joint probabilities did not appear tractable, and considering the circulant pathology alone is sufficient to produce non-trivial results. We emphasize that the bounds we derive apply more broadly to the phenomenon of non $d$-Euclidean profiles, because the set of such profiles is a superset of those that contain the circulant pathology.

    \begin{definition}[circulant pathology]
        A circulant pathology of size $k$ is a preference sub-profile consisting of $k$ alternatives $a_1,\dots,a_k$ and $k$ individuals $1,\dots,k$ such that
        $$
        \begin{aligned}
        & a_1 & >_1\ \ & a_2 & >_1\ \ & \dots & >_1\ \ & a_{k-1} & >_1\ \ & a_k\\ 
        & a_2 & >_2\ \ & a_3 & >_2\ \ & \dots & >_2\ \ & a_{k}   & >_2\ \ & a_1\\ 
        &\ \vdots&&\ \vdots&&&&\ \vdots&&\ \vdots\\ 
        & a_k & >_k\ \ & a_1 & >_k\ \ & \dots & >_k\ \ & a_{k-2} & >_k\ \ & a_{k-1}.
        \end{aligned}
        $$
    \end{definition}

Note that for each instance of the circulant pathology, there is a unique \textit{circular permutation} of $k$ alternatives that is consistent with all the sub-preferences involved in the pathology. (A circular permutation is a unique ordering of the alternatives on a circle.) We refer to each of these sub-preferences as \textit{necessary sub-preferences}.

    \begin{theorem}[\citeauthor{bogomolnaia2007} \citeyear{bogomolnaia2007}]
        If a profile $\Pi\in\mathcal{P}_{A,I}$ contains a circulant pathology of size $k$ as a sub-profile, then $\Pi$ is not $d$-Euclidean for any $d \leq k-2$.
    \end{theorem}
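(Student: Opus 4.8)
The plan is to first reduce to the pure circulant. Since any sub-profile of a $d$-Euclidean profile is itself $d$-Euclidean (just restrict the embedding to the points involved), it suffices to show that the size-$k$ circulant, on its own $k$ alternatives and $k$ individuals, is not $d$-Euclidean when $d \le k-2$. So I would assume for contradiction an embedding with locations $x^{a_1},\dots,x^{a_k}$ and ideal points $w^1,\dots,w^k$ in $\mathbb{R}^d$. The first key move is to linearize the comparisons: since $a >_i b \Leftrightarrow \|x^a-w^i\|^2 < \|x^b-w^i\|^2$, expanding the squared norms (the $\|w^i\|^2$ terms cancel) gives $a >_i b \Leftrightarrow f^i(a) > f^i(b)$, where $f^i(a) = 2\langle x^a, w^i\rangle - \|x^a\|^2$ is a \emph{score} that individual $i$ assigns to alternative $a$, higher being better.

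The dimension hypothesis enters through linear dependence. Lifting each ideal point to $q^i = (w^i, 1) \in \mathbb{R}^{d+1}$, the $k$ vectors $q^1,\dots,q^k$ lie in a space of dimension $d+1 \le k-1$, so they are linearly dependent: there is a nonzero $\beta$ with $\sum_i \beta_i q^i = 0$. The last coordinate gives $\sum_i \beta_i = 0$ and the remaining coordinates give $\sum_i \beta_i w^i = 0$; together these force $\sum_i \beta_i f^i(a) = 0$ for every alternative $a$. Because $\beta \ne 0$ has entries summing to zero, the sets $Q = \{i : \beta_i > 0\}$ and $N = \{i : \beta_i < 0\}$ are both nonempty; setting $\gamma = \sum_{i\in Q}\beta_i = \sum_{i\in N}(-\beta_i) > 0$, the identity says the two normalized averaged score functions coincide, $F(a) := \sum_{i\in Q}\frac{\beta_i}{\gamma}f^i(a) = \sum_{i\in N}\frac{-\beta_i}{\gamma}f^i(a)$ for all $a$.

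The contradiction then comes from evaluating $F$ on consecutive alternatives. In the circulant every individual except $i = s+1$ ranks $a_s$ above $a_{s+1}$, so $f^i(a_s) - f^i(a_{s+1}) > 0$ for all $i \ne s+1$. For each $s$ the index $s+1$ lies in at most one of $Q$ and $N$, so at least one of these (nonempty) sets avoids it; I evaluate $F(a_s) - F(a_{s+1})$ using the representation attached to that set, which is then a strictly positive combination of strictly positive score gaps, giving $F(a_s) > F(a_{s+1})$. Running this cyclically yields $F(a_1) > F(a_2) > \dots > F(a_k) > F(a_1)$, which is impossible.

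I expect the main obstacle to be the boundary dimension $d = k-2$. A naive rank count is off by one: the $k\times k$ score matrix has rank at most $d+1$, but the cyclic structure already forces a zero-sum relation among its columns, so rank at most $k-1$ holds in \emph{every} realization and yields only $d \ge k-2$, not the required $d \ge k-1$. The device that closes this gap is the two-sided representation of $F$: the lone individual who ranks $a_{s+1}$ above $a_s$ can always be dodged by computing the gap on whichever side, $Q$ or $N$, omits the index $s+1$, so no term of the wrong sign ever enters. Checking that both $Q$ and $N$ are nonempty and that precisely this dissenting index can be avoided at every step is the delicate part of the argument.
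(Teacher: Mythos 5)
Your proof is correct. One point of context: the paper does not actually prove this statement --- it is imported verbatim by citation from \citet{bogomolnaia2007} --- so there is no in-paper proof to compare against, and your argument has to stand on its own. It does. The reduction to the pure circulant (restriction of a $d$-Euclidean embedding witnesses the sub-profile), the linearization $a >_i b \Leftrightarrow f^i(a) > f^i(b)$ with $f^i(a) = 2\langle x^a, w^i\rangle - \|x^a\|^2$, and the linear dependence of the $k$ lifted points $q^i=(w^i,1)$ in $\mathbb{R}^{d+1}$ with $d+1\le k-1$ are all sound, and the identity $\sum_i \beta_i f^i(a)=0$ follows exactly as you say from $\sum_i\beta_i=0$ and $\sum_i\beta_i w^i=0$. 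The step you flag as delicate does go through: $\beta\neq 0$ with zero sum forces both $Q$ and $N$ to be nonempty, and for each cyclic edge $(a_s,a_{s+1})$ the unique dissenting individual $s+1$ lies in at most one of the two disjoint sets, so $F(a_s)-F(a_{s+1})$ can always be evaluated on the side that omits it, as a nonempty positive combination of strictly positive score gaps. Chaining around the cycle gives $F(a_1)>\cdots>F(a_k)>F(a_1)$, the desired contradiction. This affine-dependence argument is the standard route to impossibility results of this type and is close in spirit to the original argument of Bogomolnaia and Laslier; your diagnosis that the two-sided representation of $F$ is what closes the gap at the boundary dimension $d=k-2$ is accurate.
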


    \begin{proof}
        See \textcite{bogomolnaia2007}, Proposition 7 and Proposition 13.
    \end{proof}

Deriving the exact probability that a circulant pathology of size $k$ arises in a randomly generated profile appears difficult, but there is some related work. For example, consider a particular circulant pathology constructed using fixed subset of all $A$ alternatives, consistent with a fixed circular permutation of those alternatives. The set of all $A!$ possible preferences can be partitioned into those that could play the role of individual $1$, those that could be individual $2$ etc., and those that cannot be part of the pathology because the order in which the $k$ alternatives appear does not match any of the necessary sub-preferences. The probability that this specific pathology arises is the probability that when choosing $I$ preferences independently and uniformly at random, we choose at least one from all but the last part in this partition. This can be framed as the probability of completing a particular row on a bingo card within $I$ draws (with replacement). It is also closely related to the \emph{coupon collector problem} \cite{neal2008} in probability theory. However, results for these problems are not easily generalized to the situation in which every possible subset of alternatives of size $k\geq d+2$ might be used to construct the pathology.

From another angle, we might start from the probability that $k$ randomly chosen preferences contain a common sub-preference over any subset of $k$ alternatives, and then hope to adjust the probability to account for the circulant offset. The closest work to this is a set of papers on the longest common subsequences in random words or permutations \cite{bukh2014,houdre2018,houdre2022}, however so far this work is limited to the setting of $k=2$.

Whilst we cannot compute it exactly, we can bound from below the probability by restricting ourselves to a subset of the ways in which the pathology might be constructed. This brings the problem within reach of an approach similar to the bingo framing described above.

    \begin{theorem}[lower bound on probability of circulant pathology]
        \label{thm:probability-of-circulant-pathology}
        Let $A$, $I$, and $d$ be fixed positive integers such that $d<\min\{I,A-1\}$, and $\P(C)$ be the probability that a profile chosen uniformly from $\mathcal{P}_{A,I}$ contains a circulant pathology of size $k\geq d+2$. Then,
        $$\P(C) \geq 1 - \left(1 - \sum_{k=d+2}^I B_k \right) ^{ \left\lfloor \frac{A}{d+2} \right\rfloor }, $$
        where
        \resizebox{.89\linewidth}{!}{$B_k=\binom{I}{k} \genfrac\{\}{0pt}{0}{k}{d+2} (d+2)! \left(\frac{1}{(d+2)!}\right)^k \left(1 - \frac{d+2}{(d+2)!}\right)^{I-k}$}
        and $\genfrac\{\}{0pt}{1}{k}{d+2}$ denotes a Stirling number of the second kind.
    \end{theorem}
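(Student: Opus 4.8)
The plan is to lower-bound $\P(C)$ by restricting attention to circulant pathologies of the smallest relevant size, $k=d+2$, confined to a fixed family of disjoint blocks of alternatives. First I would partition the $A$ alternatives into $N=\lfloor A/(d+2)\rfloor$ pairwise-disjoint blocks $S_1,\dots,S_N$, each of size exactly $d+2$ (discarding the at most $d+1$ leftover alternatives); the hypothesis $d<A-1$ gives $d+2\le A$, so $N\ge1$. By definition $C$ is the event that the profile contains a circulant pathology of some size $k\ge d+2$, and a size-$(d+2)$ pathology supported on the $d+2$ alternatives of a single block is in particular such a pathology. Writing $E_j$ for the event ``block $S_j$ hosts a circulant pathology on its own $d+2$ alternatives'', each $E_j\subseteq C$, so $\P(C)\ge\P(E_1\cup\dots\cup E_N)$; restricting to these blocks only discards ways the pathology could form, so the inequality goes the right way. (By the circulant-pathology theorem of \citet{bogomolnaia2007} with $k=d+2$, any such profile is moreover non-$d$-Euclidean, which is why $C$ is of interest.)

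Next I would compute $p:=\P(E_j)$, which is the same for every block. For each individual the relative order induced on $S_j$ is uniform over the $(d+2)!$ orderings of $S_j$, and among these exactly $d+2$ are the cyclic shifts required by the pathology --- call these the \emph{roles}. Thus $E_j$ is exactly the occupancy event that all $d+2$ roles appear among the $I$ induced orders. I would compute its probability by conditioning on the number $k$ of individuals (``players'') whose induced order is one of the roles: choose the players in $\binom{I}{k}$ ways, assign them surjectively onto the $d+2$ roles in $\genfrac\{\}{0pt}{0}{k}{d+2}(d+2)!$ ways (the number of surjections), give each player probability $1/(d+2)!$ of realizing its assigned role, and give each of the remaining $I-k$ individuals probability $1-(d+2)/(d+2)!$ of avoiding all roles. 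These configurations are disjoint and partition $E_j$, so $p=\sum_{k=d+2}^{I}B_k$ with $B_k$ as stated. (When $d=I-1$ the sum is empty and the bound is the trivial $\P(C)\ge0$, consistent with the hypothesis $d<\min\{I,A-1\}$.)

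The crux is that $E_1,\dots,E_N$ are \emph{mutually independent}. This rests on the fact that, for a uniformly random total order, the relative orders it induces on disjoint subsets are independent and individually uniform; the cleanest justification is to generate each individual's preference by assigning i.i.d.\ uniform keys to the alternatives and sorting, so that the induced order on $S_j$ depends only on the keys in $S_j$, while disjoint blocks use disjoint, independent keys. Combined with independence across individuals, the whole array of induced orders has mutually independent entries; since $E_j$ depends only on column $j$, the block-events are independent, each with probability $p$. Therefore $\P(E_1\cup\dots\cup E_N)=1-\prod_{j=1}^N\P(\overline{E_j})=1-(1-p)^N$, which is the claimed bound.

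I expect the main obstacle to be the independence step: one must argue carefully that the per-block occupancy events genuinely decouple despite being built from the same $I$ individuals, which is where the disjoint-block property of random orders does the real work. A secondary point requiring care is verifying that summing $B_k$ over $k$ double-counts nothing --- that the (player-set, surjection, non-player) decomposition is an exact partition of the ``all roles covered'' event --- and that the surjection count $\genfrac\{\}{0pt}{0}{k}{d+2}(d+2)!$ and the role/non-role probabilities are assembled correctly; everything after that is the standard $1-(1-p)^N$ bound for a union of independent events.
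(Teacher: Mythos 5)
Your proposal is correct and follows essentially the same route as the paper's proof: fix $\lfloor A/(d+2)\rfloor$ disjoint blocks of $d+2$ alternatives with a fixed reference cyclic order on each, compute the probability $\sum_{k=d+2}^I B_k$ that all $d+2$ cyclic-shift sub-preferences are covered via the same surjection/occupancy decomposition (with the non-players required to avoid all roles, making the $k$-terms disjoint), and combine the blocks using independence of induced orders on disjoint alternative sets. Your i.i.d.-keys justification of that independence is a slightly more explicit version of the argument the paper gives, but the structure and the bound are identical.
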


    \begin{proof}
        We build up the bound step by step. First, consider the version of the pathology that involves a specific subset of $d+2$ alternatives $a_1,\dots,a_{d+2}$, and is consistent with a specific circular permutation of those $d+2$ alternatives. For this version of the pathology to arise, we need at least one individual to have each of the following sub-preferences.
        $$\begin{aligned}
        &a_1 & >\ \  & a_2 & >\ \  & \dots & >\ \  & a_{d+2}\\ 
        &a_2 & >\ \  & a_3 & >\ \  & \dots & >\ \  & a_1\\ 
        &\ \vdots&&\ \vdots&&&&\ \vdots\\ 
        & a_{d+2} & >\ \  & a_1 & >\ \  & \dots & >\ \  & a_{d+1}
        \end{aligned}$$
        Recall that we call these \textit{necessary sub-preferences}. The probability that this version of the pathology arises in a profile chosen uniformly at random from $\mathcal{P}_{A,I}$ is 
        \begin{multline*}
            \underbrace{
             \binom{I}{k}
             \genfrac\{\}{0pt}{0}{k}{d+2}
             (d+2)!
            }_{\substack{\textsf{\# ways to choose $k$ individuals from $I$,} \\ \textsf{partition them into $d+2$ non-empty parts,} \\ \textsf{and assign each part to a necessary}\\ \textsf{sub-preference}}}\\ 
            \times
            \underbrace{
             \left(\frac{1}{(d+2)!}\right)^k
            }_{\substack{\textsf{probability those $k$ individuals}\\ \textsf{randomly get assigned}\\ \textsf{preferences that have those}\\ \textsf{necessary sub-preferences}}}
            \times
            \underbrace{
             \left(1-\frac{d+2}{(d+2)!}\right)^{I-k}
            }_{\substack{\textsf{probability the other $I-k$}\\ \textsf{individuals randomly get assigned}\\ \textsf{preferences that lack those}\\ \textsf{necessary sub-preferences}}}
        \end{multline*}
        For brevity, we will use $B_k$ to refer to this binomial-like expression. Note that $\genfrac\{\}{0pt}{1}{k}{d+2}$ denotes a Stirling number of the second kind, and can be defined as the number of ways to partition a set of $k$ objects into $d+2$ non-empty subsets. The number of individuals involved in the pathology, $k$, must be at least $d+2$, and can be as high as $I$. Further, because we explicitly require that the $I-k$ individuals not involved in the pathology do not have any of the necessary sub-preferences, the events where different numbers of individuals are involved are mutually exclusive. Thus, we can sum over the alternate choices of $k$ to get
        $$\sum_{k=d+2}^I B_k,$$
        which is the probability that this particular version of the pathology arises for any $k$.
        
        Ideally, we would be able to generalize this expression to account for other versions of the pathology that involve the same fixed set of alternatives, but are consistent with different circular permutations of those alternatives. This is not easy to do because we have already counted the necessary sub-preferences for these versions of the pathology among the non-necessary sub-preferences in the expression of $B_k$. Put another way, the occurrences of versions of the pathology with different reference permutations are not independent.
        
        However, versions of the pathology constructed using disjoint subsets of alternatives \textit{do} occur independently. This is because the subsets of alternatives can be permuted within a preference order without affecting the relative positions of alternatives that are not part of that subset.

        \begin{figure}[t]
            \includegraphics[width=0.99\columnwidth]{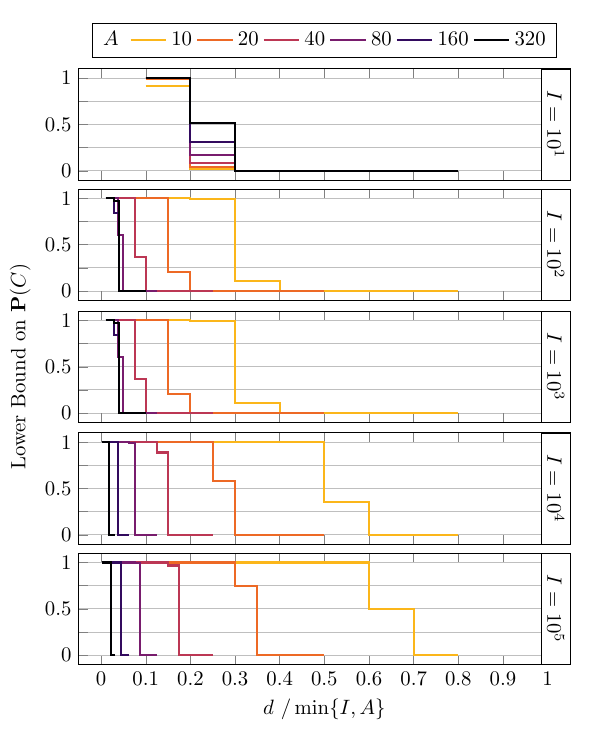}
            \caption{Lower bound on $\P(C)$, and hence the probability that a profile chosen uniformly at random is not $d$-Euclidean, for various $d$, $A$ and $I$.}
            \label{fig:circulant-profiles}
        \end{figure}
        
        Thus, we can generalize the expression to allow for the fact that there are multiple disjoint subsets of alternatives from which this pathology can arise. Each subset must be of size $d+2$, so at most we could specify $\left\lfloor \frac{A}{d+2} \right\rfloor$ disjoint subsets. Using the independence property from the previous paragraph, the probability increases to
        $$1 - \underbrace{\left(1 - \sum_{k = d+2}^I B_k \right) ^ {\left\lfloor \frac{A}{d+2} \right\rfloor}}_{\substack{\textsf{probability that none of these}\\ \textsf{$\left\lfloor A/(d+2) \right\rfloor$ versions of the}\\ \textsf{pathology occur}}}.$$
        This expression is the probability that at least 1 of each of $\left\lfloor A/(d+2) \right\rfloor$ versions of the pathology, each constructed using a disjoint set of alternatives, occurs. However, it does not account for all possible subsets of alternatives that could be used, or all possible circular permutations of those subsets. Thus, it is a lower bound on $\P(C)$, the probability that any version of the pathology is contained in a randomly chosen profile.
    \end{proof}

We can now numerically evaluate this expression for a range of values of $A$, $I$ and $d$ to see what the probability (or proportion) is in real terms (Figure \ref{fig:circulant-profiles}). When $d$ is a lot smaller than $A$ and $I$, almost all profiles are not $d$-Euclidean, and for any fixed $d$ and $A$ this proportion appears to approach $1$ as $I \to \infty$. Increasing $d$ appears to be quite effective at reducing the proportion of non-Euclidean profiles (there is a lot of room in a high-dimensional space).

We conclude that in some circumstances, almost all profiles are not $d$-Euclidean. So what? If we \textit{approximate} them with a Euclidean preference model, is the approximation error big enough to worry about?

\subsection{How large is the expected error?}
\label{sec:q3}

    \begin{figure}
        \includegraphics[width=0.99\columnwidth]{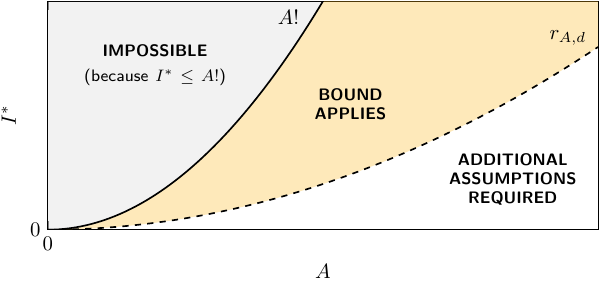}
        \caption{A (not to scale) diagram indicating the values of $I^*$ and $A$ for which the bound on the expected error given in Theorem \ref{thm:error} applies. When $I^* < r_{A,d}$, additional assumptions would be required to produce a non-trivial lower bound, because the expected error could be as low as zero depending on the profile $\Pi$.}
        \label{fig:bound-applicability}
    \end{figure}

There are different ways to quantify the error of a $d$-Euclidean approximation to an arbitrary preference profile. The most natural approach may be to count the minimal number of swaps needed to change the given profile into one that is $d$-Euclidean. However, this intuitive approach seems difficult given the findings of \textcite{peters2016}, who proved that the task of finding such best-approximations is, in general, NP-hard. Instead, we consider the question from the perspective of individual preferences, rather than complete profiles, and use the following setup:
\begin{enumerate}
    
    \item Take an arbitrary profile $\Pi \in \mathcal{P}_{A,I^*}$ consisting of $\UniquePref$ unique preferences. (It doesn't matter which profile it is — we assume we don’t know.)

    \item Approximate $\Pi$ as well as possible in a $d$-dimensional Euclidean model. Call this model $\text{Euclidean}(\Pi)$. (It doesn't matter how good the approximation algorithm is, our bound assumes it finds the best possible fit.)

    \item Observe a new preference $\pi$ generated uniformly from among all $A!$ preferences. (It could duplicate one of the existing $\UniquePref$ preferences. The new preference $\pi$ is generated \textit{after} the model $\text{Euclidean}(\Pi)$ is fit. Knowledge of $\pi$ cannot inform $\text{Euclidean}(\Pi)$, otherwise one could trivially choose a model in which $\pi$ was representable.)

    \item Let $\hat{\pi}$ be the preference representable in $\text{Euclidean}(\Pi)$ that minimizes $m(\hat{\pi}, \pi)$ for some error measure $m$.
    
\end{enumerate}
That is, $\hat{\pi}$ is the closest possible approximation to $\pi$ in such a Euclidean preference model. As our measure of error $m$, we use the Kendall tau distance (or bubble-sort distance):
$$m(\pi, \pi')=\text{\# pairwise disagreements between }\pi\text{ and }\pi'.$$
Equivalently, $m(\pi, \pi')$ can be defined as the minimum number of adjacent swaps required to transform $\pi$ to $\pi'$. The Kendall tau distance is well-established \cite{kumar2010} and has been widely used as a metric between preferences in the social choice literature \cite{obraztsova2012,obraztsova2013,anand2021}. We are interested in both $\E[m(\pi,\hat{\pi})]$ and $\E[m(\pi, \hat{\pi})] / \binom{A}{2}$, which is the expected number of adjacent swaps required as a proportion of the maximum possible number of swaps between any two preferences. Intuitively, $\E[m(\pi, \hat{\pi})] / \binom{A}{2}$ can also be interpreted as the probability that two alternatives chosen uniformly at random will be ranked differently by $\pi$ and $\hat{\pi}$. Given the lack of tractable algorithms or formulae for optimally approximating arbitrary profiles with Euclidean models (see Section \ref{sec:introduction}), we do not compute these expectations exactly, and instead derive lower bounds.

The intuition for our approach is as follows. For any choice of $A$ and $d$, there will be some maximum number of unique preferences (or permutations) $r_{A,d} \leq A!$ that can be simultaneously represented in Euclidean space of dimension $d$. In what follows, we will abbreviate $r_{A,d}$ to $r$ (the dependence on $A$ and $d$ is implied) and in order to bound the expected error will assume that $\UniquePref \geq \MaxPref$ because, in this case, the probability that $\pi$ is representable is fixed (Figure \ref{fig:bound-applicability}). Assume that we have an overestimate of $\MaxPref$, say $\hat{\MaxPref}$. Then, if we ``distribute'' these $\hat{\MaxPref}$ ``representable'' preferences ``evenly'' throughout the set of all possible preferences, this will minimize the expected distance between $\pi$ and $\hat{\pi}$, the nearest of the $\hat{\MaxPref}$ ``representable'' preferences. We use this minimized distance (or more precisely, distribution of distances) to produce a lower bound for $\E[m(\pi,\hat{\pi})]$. It is only a lower bound because $\hat{\MaxPref}$ may overestimate $\MaxPref$.

Along with an upper bound $\hat{\MaxPref}$, this approach requires a structure on the set of all $A!$ preferences that allows us to ``evenly distribute'' the $\hat{\MaxPref}$ preferences that might be representable. For this, we use the \emph{permutohedron} (Figure \ref{fig:permutohedron}), a high dimensional polytope where each vertex corresponds to a permutation, and edges between vertices correspond to single swap operations between adjacent elements. (There is a bijection between permutations and preferences.) For example, the permutohedron of order 4 will have a vertex corresponding to the permutation \texttt{1234} that is connected to three other vertices: \texttt{2134}, \texttt{1324}, and \texttt{1243}, because they are the permutations that can be reached by applying a single adjacent swap to \texttt{1234} \cite{gaiha1977}. Our overestimate for $\hat{\MaxPref}$ is given by the following lemma.

    \begin{figure}
        \centering
        \includegraphics[width=0.99\columnwidth]{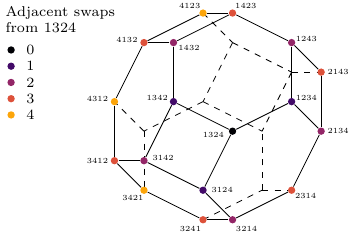}
        \caption{The permutohedron of order 4.}
        \label{fig:permutohedron}
    \end{figure}

    \begin{lemma}[upper bound on $\MaxPref$]
        \label{lemma:representable-proportion-of-preferences}
        Let $\MaxPref$ be the maximum number of $A!$ unique preferences over $A$ alternatives that are simultaneously representable in a $d$-dimensional Euclidean preference model. If $\UniquePref \geq \MaxPref$, then
        $$\MaxPref \leq \hat{\MaxPref} = \left(1 - \P(\CircPref)\right)A!,$$
        where $\P(\CircPref)$ is the proportion of A! unique preferences over $A$ alternatives that \emph{cannot} be simultaneously represented due to the circulant pathology. Explicitly, let $\Index{n}$ be the event that a random permutation of integers $1,\dots,A$ places the integer $n$ ($n\leq A$) within the first $A-d-n$ positions, and $\ \comp{\Index{n}}$ denote the complement of $\Index{n}$. Then $\P(\CircPref)$ can be written as
        $$\P(\CircPref) = \P(\Index{1}) + \P(\comp{\Index{1}})\P(\CircPref\mid \comp{\Index{1}}),$$
        where the conditional probability is defined recursively, as follows. For $N<A-d-2$,
        \begin{multline*}
            \P\left(\CircPref\mid \bigcap_{n = 1}^N \comp{\Index{n}}\right) =\P\left(\Index{N+1}\mid \bigcap_{n = 1}^N \comp{\Index{n}}\right)\\
            + \P\left(\comp{\Index{N+1}}\mid \bigcap_{n = 1}^N \comp{\Index{n}}\right) \P\left(\CircPref\mid \bigcap_{n = 1}^{N+1} \comp{\Index{n}}\right),
        \end{multline*}
        and for $N=A-d-2$,
        $$\P\left(\CircPref\mid \bigcap_{n = 1}^N \comp{\Index{n}}\right)=\P\left(\Index{A-d-1}\mid \bigcap_{n = 1}^{A-d-2} \comp{\Index{n}}\right).$$
    \end{lemma}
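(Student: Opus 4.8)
The plan is to reduce the bound on $\MaxPref$ to a counting problem about preference families that avoid the circulant pathology, and then to recover $\P(\CircPref)$ as the chain-rule expansion of the complementary probability. First I would invoke Theorem 2: any family $\mathcal{F}$ of distinct preferences that is simultaneously representable in a $d$-dimensional Euclidean model contains no circulant pathology of size $k=d+2$. Consequently $\MaxPref$ is at most the cardinality of the largest pathology-free family, and it suffices to prove $|\mathcal{F}|\le(1-\P(\CircPref))A!$ for every such $\mathcal{F}$.

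The combinatorial heart is to show that $|\mathcal{F}|$ is bounded by the size of the canonical family $\comp{\CircPref}=\bigcap_{n=1}^{M-1}\comp{\Index{n}{M-n}}$ (with $M=A-d$): the preferences in which, for each $n\le M-1$, alternative $n$ is ranked below position $M-n$ (equivalently, the alternative at each rank $j\le M-1$ is at least $M-j+1$). I would establish this by processing alternatives $1,\dots,M-1$ in turn and bounding, at stage $n$, the conditional fraction of preferences that still place alternative $n$ within the top $M-n$ ranks: each offending preference can be paired with cyclic rotations of a block of $d+2$ alternatives (alternative $n$ together with $d+1$ lower-ranked alternatives), and representability forbids the full set of $d+2$ shifts, so the admissible fraction at stage $n$ is at most $\P(\comp{\Index{n}{M-n}}\mid\bigcap_{m<n}\comp{\Index{m}{M-m}})$. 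Multiplying these stagewise bounds yields $|\mathcal{F}|\le|\comp{\CircPref}|$. It is essential that the pathologies be nested across all of $1,\dots,M-1$: excluding a single $(d+2)$-orbit in isolation would give only the much weaker bound $(1-\tfrac{1}{d+2})A!$, whereas the staircase arises precisely from chaining the exclusions.

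It then remains to evaluate $|\comp{\CircPref}|=A!\,\P(\comp{\CircPref})$. The chain rule over the ranks of alternatives $1,\dots,M-1$ gives
\[
\P(\comp{\CircPref})=\prod_{n=1}^{M-1}\P\!\left(\comp{\Index{n}{M-n}}\ \middle|\ \bigcap_{m=1}^{n-1}\comp{\Index{m}{M-m}}\right),
\]
which is the expansion encoded by the recursion in the statement, its base case at $N=M-2$ accounting for the final alternative $M-1$. Since the forbidden initial segments are nested, each factor simplifies to $(d+1)/(A-n+1)$, so $\hat{\MaxPref}=(d+1)^{M-1}(d+1)!$; passing to complements stage by stage reproduces the stated formula for $\P(\CircPref)=1-\P(\comp{\CircPref})$, and substituting gives $\MaxPref\le(1-\P(\CircPref))A!=\hat{\MaxPref}$.

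The main obstacle is the stagewise bound of the second step: one must show that, at each stage $n$, the absence of a size-$(d+2)$ circulant pathology limits the preferences keeping alternative $n$ too high to exactly the conditional fraction above, choosing the block of $d+2$ alternatives and its cyclic shifts so that every violating preference is genuinely witnessed by a pathology and without double-counting across stages. By comparison, the chain-rule evaluation of the final step is routine.
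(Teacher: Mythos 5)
Your overall blueprint matches the paper's: reduce to circulant pathologies of size $d+2$; observe that for each choice of $d+2$ alternatives and each circular permutation of them, at least one of the $d+2$ cyclic-shift sub-preferences must be absent from any representable family; identify the canonical excluded set with $\CircPref=\bigcup_{n=1}^{M-1}\Index{n}{M-n}$; and evaluate $\P(\CircPref)$ by the recursive law of total probability. Your closed-form evaluation $\hat{\MaxPref}=(d+1)^{M-1}(d+1)!$ is consistent with the paper's recursion (each conditional factor is indeed $(d+1)/(A-n+1)$) and is a useful simplification the paper does not state.

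However, there is a genuine gap at exactly the step you flag as the ``combinatorial heart.'' You aim to prove the stronger extremal statement that \emph{every} pathology-free family $\mathcal{F}$ satisfies $|\mathcal{F}|\le|\comp{\CircPref}|$, via a stagewise bound on the conditional fraction of $\mathcal{F}$ lying in $\Index{n}{M-n}$. But the only mechanism you offer --- pairing each offending preference with the $d+2$ cyclic rotations of a block and noting that representability forbids all $d+2$ --- shows only that $\mathcal{F}$ misses at least one member of each orbit, which yields the weak bound $(1-\tfrac{1}{d+2})A!$ that you yourself dismiss. Nothing in the proposal explains how ``chaining the exclusions'' upgrades this to the claimed conditional fractions: a pathology-free family is free to omit a \emph{different} representative from each orbit than the canonical one (the shift ranking the minimum-index alternative highest), and these orbits overlap in complicated ways across stages, so the stagewise inequality does not follow from the orbit argument and is left unproven. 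The paper avoids this difficulty by taking the opposite tack: it fixes one canonical banning convention, declares the choice to be without loss of generality, and then only has to \emph{count} the banned set, which is exactly $\P(\CircPref)\,A!$. (One can debate whether that ``without loss of generality'' is itself fully justified, but it is the step your proposal would need to either reproduce or replace with an actual extremal argument, and currently it does neither.) The remainder of your proposal --- the chain-rule expansion and the recursion --- is routine and correct, but the lemma is not established without closing this central step.
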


    \begin{proof}
        The intuition for the proof is as follows. We want to start with the full set of $A!$ possible preferences, and then ``ban'' the minimum number of preferences required to guarantee that the circulant pathology does not arise. What is this minimum number?
        
        We need to only review circulant pathologies of size $d+2$ (the smallest possible), as any circulant pathologies of size $k>d+2$ would contain a size $d+2$ pathology as a sub-profile.
        
        For every combination of $d+2$ alternatives, there are $(d+1)!$ circular permutations of those alternatives. Each of these circular permutations characterizes a way in which the circulant pathology might arise.
        
        For each of these circular permutations of $d+2$ alternatives, there are $d+2$ necessary sub-preferences over those alternatives that are consistent with the circular permutation. To avoid the pathology, we need to ban one of these necessary sub-preferences for every combination and circular permutation of $d+2$ alternatives.
        
        At first, determining how many (complete) preferences are affected by this set of sub-preference bans appears difficult, because a preference may be consistent with more than one of the banned sub-preferences, and these banned sub-preferences may involve some of the same alternatives. However, it turns out that the task is equivalent to computing the probability of a particular event related to the positions of some specific alternatives within a random permutation of all alternatives, and that this computation is tractable.
        
        Consider each set of $d+2$ sub-preferences (corresponding to each combination and circular permutation of $d+2$ alternatives) that could form the circular pathology. We must ban one of these $d+2$ sub-preferences. Without loss of generality, we choose to ban the sub-preference that ranks most highly the alternative with the minimum index. For example, of the set of sub-preferences
        $$\begin{aligned}
        & a_6    & >\ \ \ & a_{13} & >\ \ \ & a_2\\ 
        & a_{13} & >\ \ \ & a_2    & >\ \ \ & a_6\\
        & a_2    & >\ \ \ & a_6    & >\ \ \ & a_{13}
        \end{aligned}$$
        we would ban the third sub-preference. The reason that this convention can be used without loss of generality is that we only care about the minimum number of sub-preferences that must be banned, and this convention is precise in that it targets exactly that minimum number. (Any other convention that had the same property could equally be used.)
        
        With this convention, the proportion of preferences that are banned is equal to the probability of the event $\CircPref$ that a random preference over the alternatives $a_1,\dots,a_A$ ``positions a low index alternative sufficiently near the top''. More precisely,
        $$\CircPref = \bigcup_{n=1}^{A-d-1} \Index{n},$$
        where $\Index{n}$ is the event that a random permutation of the alternatives $a_1,\dots,a_A$ positions $a_n$ within the first $A-d-n$ positions of the permutation. If (and only if) $\CircPref$ occurs, there will be a low-index alternative ranked sufficiently highly to ensure that there are enough high-index alternatives ranked below it to induce at least one of the banned sub-preferences.
        
        It now remains to compute $\P(\CircPref)$. There is no closed form expression, but it can be written down using a recursive formula that corresponds to repeated application of the law of total probability. Specifically,
        $$\begin{aligned}
        \P(\CircPref) &= \P(\Index{1})\underbrace{\P(E\mid\Index{1})}_{=1} + \P(\comp{\Index{1}})\P(\CircPref\mid \comp{\Index{1}}) \\[4pt]
        &= \P(\Index{1}) + \P(\comp{\Index{1}})\P(\CircPref\mid \comp{\Index{1}})
        \end{aligned}$$
        where the conditional probability is defined recursively, as follows. For $N<A-d-2$,
        \begin{multline*}
            \P\left(\CircPref\mid \bigcap_{n = 1}^N \comp{\Index{n}}\right) =\P\left(\Index{N+1}\mid \bigcap_{n = 1}^N \comp{\Index{n}}\right)\\
            + \P\left(\comp{\Index{N+1}}\mid \bigcap_{n = 1}^N \comp{\Index{n}}\right) \P\left(\CircPref\mid \bigcap_{n = 1}^{N+1} \comp{\Index{n}}\right),
        \end{multline*}
        and for $N=A-d-2$,
        $$\P\left(\CircPref\mid \bigcap_{n = 1}^N \comp{\Index{n}}\right)=\P\left(\Index{A-d-1}\mid \bigcap_{n = 1}^{A-d-2} \comp{\Index{n}}\right).$$
        The above expressions look a bit intimidating but the basic idea---the law of total probability---is quite simple. We compute the probability that $a_1$ is in the top $A-d-1$ positions, or if not, that $a_2$ is in the top $A-d-2$ positions, or if not, that $a_3$ is in the top $A-d-3$ positions, and so on. The relevant probabilities are analytically simple to compute. Explicitly, for $N \leq A-d-2$,
        $$\begin{aligned}
        \P\left(\Index{N+1}\mid \bigcap_{n = 1}^N \comp{\Index{n}}\right) &= 1 - \P\left(\comp{\Index{N+1}}\mid \bigcap_{n = 1}^N \comp{\Index{n}}\right)\\ &=1 - \frac{ (A-N-1)_{A-N-d-1} }{ (A-N)_{A-N-d-1} }\\ &= 1 - \prod_{k=d+2}^{A-N} \frac{ k-1 }{ k }. 
        \end{aligned}$$
        The second equality comes from a combinatorial argument: to deduce the falling factorial in both the numerator and the denominator, write out how many alternatives are ``available'' to populate each position in the permutation given the conditioning constraints, and multiply them together.
    \end{proof}
    
How small is $\hat{\MaxPref}$, in real terms? Figure \ref{fig:circulant-preferences} plots $\hat{\MaxPref}/A!$ for a variety of values of $d$ and $A$. In most cases, only a small proportion of preferences can be simultaneously represented, and it is only as $d$ nears $A$ that the proportion increases. So how much error should we expect when approximating such preferences in a Euclidean preference model?

    \begin{figure}
        \includegraphics[width=0.99\columnwidth]{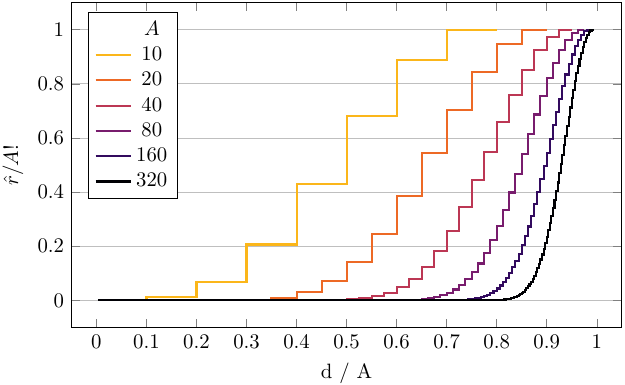}
        \caption{Upper bound on the proportion of all $A!$ possible preferences that can be simultaneously represented in a $d$-Euclidean model, for various $A$, $d$, and valid when $\UniquePref \geq \MaxPref$.}
        \label{fig:circulant-preferences}
    \end{figure}

  \begin{figure*}
        \centering
        \includegraphics[width=\textwidth]{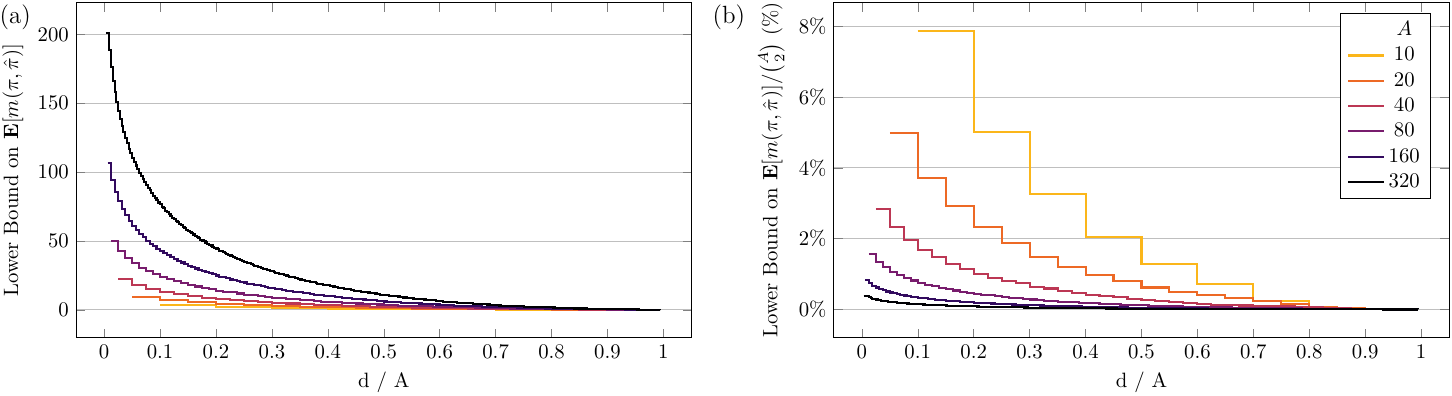}
        \caption{(a) The lower bound on $\E[m(\pi,\hat{\pi})]$ and (b) the lower bound on $\E[m(\pi,\hat{\pi})]/\binom{A}{2}$, both for $\UniquePref \geq \MaxPref$ and various $A$, $d$.}
        \label{fig:error}
  \end{figure*}

    \begin{theorem}[lower bound on expected error]
        \label{thm:error}
        Let $A$, $d$ be fixed positive integers such that $d<A-1$, $\Pi \in \mathcal{P}_{A,I^*}$ consist of $\UniquePref$ unique preferences, $\pi$ be a preference chosen uniformly at random from the set of $A!$ possible preferences, $\hat{\pi}$ be the nearest preference to $\pi$ that is representable in $\text{Euclidean}(\Pi)$ (that is, the representable preference that can be reached in the fewest number of adjacent swaps), and $K$ be a positive integer such that $K\leq \binom{A}{2}$. If $\UniquePref \geq \MaxPref$, then
        $$\E[m(\pi,\hat{\pi})] \geq \sum_{k=0}^K \frac{\left(A! - n_{k,A} \right)_{\hat{\MaxPref}}}{\left(A!\right)_{\hat{\MaxPref}}}\mathbf{1}(\hat{\MaxPref} < A! - n_{k,A}),$$
        where $(\cdot)_{\hat{\MaxPref}}$ denotes a falling factorial, $\mathbf{1}(\cdot)$ the indicator function, and $n_{k,A}=\min\{(A-1)^k,\ A!\}$.
    \end{theorem}

    \begin{proof}
        First we assume that $\UniquePref \geq r$, because without this assumption, it is possible that all preferences in our profile are simultaneously representable, so the lower bound on our measure of error would be zero. Without further information about the composition of the particular profile $\Pi$, a lower bound on $\E[m(\pi,\hat{\pi})]$ can only be larger than zero if there are more unique preferences than can be simultaneously represented in a $d$-dimensional Euclidean model---that is, when $\UniquePref \geq r$.
    
        The distribution of the $\hat{\MaxPref}$ ``representable'' preferences over the vertices of the permutohedron that would minimize the expectation $\E[m(\pi,\hat{\pi})]$ is the uniform distribution. The intuition for this is a symmetry argument. All vertices on the permutohedron are topologically equivalent to one another — none are better connected than the others — so there are no free lunches. Any distribution of the ``representable'' preferences over the permutohedron that deviated from the uniform distribution by tending to place those preferences nearer to some vertices would tend to move them farther away from many more vertices.
        
        Thus, to lower bound the expectation, we select $\hat{\MaxPref}$ vertices uniformly at random (without replacement), at which to position the ``representable'' preferences. This gives a function $F$ that bounds from above the distribution function of $m(\pi,\hat{\pi})$. For any non-negative integer $k$,
        $$F(k)= 1 - \underbrace{\frac{\left(A! - n_{k,A} \right)_{\hat{\MaxPref}}}{\left(A!\right)_{\hat{\MaxPref}}}\mathbf{1}(\hat{\MaxPref} < A! - n_{k,A})}_{\substack{\textsf{probability that none of the $\hat{\MaxPref}$ ``representable''}\\ \textsf{preferences fall on the $n_{k,A}$ ``reachable'' vertices}}},$$
        where $(\cdot)_{\hat{\MaxPref}}$ denotes a falling factorial, $\mathbf{1}(\cdot)$ the indicator function, and $n_{k,A}=\min\{(A-1)^k,\ A!\}$ is an upper bound on the number of unique preferences that are reachable within $k$ swaps. The formula for $n_{k,A}$ arises from the fact that each additional swap makes accessible at most a factor of $A-1$ additional unique preferences, though this is a very loose upper bound because many of those will be reachable with fewer swaps. The minimum is used because there are only $A!$ vertices on the permutohedron.
        
        The form of $F$ is analogous to the exact distribution function of $m(\pi,\hat{\pi})$ under our assumed uniform distribution on the positions of the representable preferences---namely, one minus the probability mass function of a hypergeometric random variable evaluated at zero, where the random variable corresponds to the number reachable vertices selected in $r$ uniformly random draws, without replacement. However, $F$ uses $\hat{\MaxPref}$ in place of $\MaxPref$ and $n_{k,A}$ in place of the true number of vertices that are reachable within $k$ swaps (Figure \ref{fig:permutohedron}). Because of these substitutions, as well as the assumption of the uniform distribution, we have 
        $$\P(m(\pi,\hat{\pi}) \leq k) \leq F(k).$$
        Now, using the well known formula that expresses the expectation of a non-negative random variable in terms of its distribution function, we can turn this into a bound on the expectation. For any non-negative integer $K<\binom{A}{2}$,
        $$\begin{aligned}
            \E[m(\pi,\hat{\pi})]
                &=\sum_{k=0}^\infty \P(m(\pi,\hat{\pi}) > k)\\
                &=\sum_{k=0}^\infty 1 - \P(m(\pi,\hat{\pi}) \leq k)\\ 
                &\geq \sum_{k=0}^\infty 1 - F(k) \geq \sum_{k=0}^K 1 - F(k),
        \end{aligned}$$
        which is the bound in the theorem.
    \end{proof}

How high is the bound in real terms? Figure \ref{fig:error}(a) plots some values of the lower bound on $\E[m(\pi,\hat{\pi})]$, and Figure \ref{fig:error}(b) plots the equivalent scaled values, $\E[m(\pi,\hat{\pi})]/\binom{A}{2}$. In both cases, we can see that expected error becomes more severe as $d/A \to 0$, but that the bound on the scaled loss appears less severe for larger values of $A$.

We reiterate that while the bounds in Lemma \ref{lemma:representable-proportion-of-preferences} and Theorems~\ref{thm:probability-of-circulant-pathology} and \ref{thm:error} are derived using only one class of pathological sub-profiles (the circulant pathology), they apply more broadly to the general phenomenon of non $d$-Euclidean profiles, because the set of such profiles is a superset of those that contain a circulant pathology.

\section{Conclusions}
\label{sec:conclusions}

We have proven theoretical lower bounds on the proportion of preference profiles of $I$ individuals over $A$ alternatives that are not $d$-Euclidean, and on the expected error when representing an arbitrary preference in a Euclidean model. Ultimately, these bounds show that when $d$ is small relative to $I$ and $A$ ($d \ll \min\{I, A\}$), almost all preference profiles are not $d$-Euclidean and, with preferences from an impartial culture, the expected error when approximating an additional, arbitrary preference in the Euclidean model can in some cases be at least 7\% of the maximum possible variation between any two preferences, as measured by the Kendall tau distance.

\subsection{Implications}

Our lower bound on the expected error has been proven for $\UniquePref \geq \MaxPref$, and is largest when $d \ll \min\{I, A\}$. These conditions may be met, for example, in some social choice models of national elections, or in models of reputation systems where there are a large number of Sybils. In such settings, our bound shows that close approximation of true underlying preference profiles may not always be expected. The magnitude of error will depend on the particular profile, and can be reduced by increasing the dimensionality, $d$. Our bound provides a way to quantify the trade-off between dimensionality and accuracy, and hence inform the choice of $d$.

Settings where $d \ll \min\{I, A\}$ are also common in deep learning. For example, text embeddings used for natural language processing commonly have $d \approx 10^3$ dimensions \cite[Ch. 6.8]{jurafsky2020}, but the English language has $\approx 10^5$ unique words \cite{nagy1984}, and the number of sentences or paragraphs will be orders of magnitude larger. Similarly, embeddings in recommender systems have up to $d \approx 10^5$ dimensions, but large platforms can serve $I \approx 10^9$ users and host $A \approx 10^{11}$ items of content \cite{satuluri2020}. To the extent that there are true, underlying preferences (or more generally, ordinal relationships) in such domains, our results suggest that these relationships may not be able to be closely approximated by such relatively low dimensional embeddings if the use of those embeddings assumes the Euclidean structure.

It is important that we are precise about the uses of embeddings in deep learning systems to which these bounds apply. There are at least two common use cases potentially affected.

\paragraph{Proximity-Based Ranking.} Embeddings are commonly compared using cosine similarity, which induces the same ordinal relationships as the Euclidean metric when applied to normalized vectors (see Section \ref{sec:introduction}). Thus when cosine similarity or Euclidean distance is used to rank the similarity of embeddings to an ideal point, such as when ranking the top-$k$ most preferred items for a user in a recommender system, the bounds suggest a limit on how accurately these ordinal relationships can be recovered.

\paragraph{SoftMax.} Embeddings are also used to recover a probability distribution over a set of alternatives. Often this is done by taking the inner product between the embedding of each alternative with a fixed reference vector, and then converting these values into probabilities using the SoftMax function \cite[Sec. 6.2.2.3]{goodfellow2016}. Geometrically, this corresponds to projecting the embeddings onto the reference vector, so the ordering of the alternatives from most probable to least probable in the resulting distribution is the same as the ordering implied by the preferences of an individual in a Euclidean preference model where the individual's ideal point is placed sufficiently distant in the direction of the reference vector. In this context, our bounds suggest that there will be pairs of alternatives for which the model is mistaken about which is more likely.

The extent to which these bounds apply to embeddings when they are used in their original context---that is, when used as inputs to a neural network with which they were jointly trained---remains an open question. It is possible that a trained network interprets the relationships between embeddings using an approximation to the Euclidean metric, in which case similar limits on accuracy may apply. But it is also possible that the network effectively memorizes the exceptions to the ordinal relations implied by the underlying spatial model, and is thus able to compensate for any error.

\subsection{Future Work} 

Our results pave the way for a research agenda useful to inform the choice of the dimensionality of embeddings or Euclidean preference models. As discussed, the bound on the expected error has only been proven for cases where $\UniquePref \geq \MaxPref$, which cannot be guaranteed for all applications. Moreover, the (computational) complexity of the bounds prevents them from being evaluated for settings where both $I$ and $A$ are larger than about $10^3$. Future work should thus prioritize extending the bound on the expected error to cases where $\UniquePref < \MaxPref$, and developing simpler approximations to the bounds that can be evaluated efficiently for large $I$ and $A$.

Another important direction would be to explore the robustness of our results to different distributions over preferences, as well as to different ``ground truth'' models of preference. For example, our proofs assume a uniform distribution over preferences and profiles (an impartial culture), but in practice some will be more probable than others. It is not clear how this would affect the bounds. It may be possible to create algorithms that detect each of the known pathologies, and use these to assess their prevalence in real-world preference data. By process of elimination, such classifiers might also facilitate the discovery of as yet unknown pathologies.

It would also be useful to understand the extent to which these bounds may contribute to sub-optimal performance of recommender systems. For example, it may be the case that accurately modelling ordinal preferences ``at the top end'' is more important than accuracy lower down the preference rankings, and it might be possible to accurately model the ordering of the most-preferred items, even if the complete preference ordering is not representable.

Finally, we note that while our lower bounds are informative, they are not tight. It is possible that non $d$-Euclidean profiles are substantially more common, or that the error is substantially more severe, than the bounds themselves. Given the possible implications for the accuracy of a widely-used preference model, it would be valuable to improve the tightness of the lower bounds, and to derive meaningful upper bounds.

\section{Acknowledgments}
Luke Thorburn was supported by UK Research and Innovation [grant number EP/S023356/1], in the UKRI Centre for Doctoral Training in Safe and Trusted Artificial Intelligence (\url{safeandtrustedai.org}). Carmine Ventre acknowledges funding from the UKRI Trustworthy Autonomous Systems Hub (EP/V00784X/1).

\printbibliography

\end{document}